\newtheorem{theorem}{Theorem}
\newtheorem{lemma}{Lemma}
\newtheorem{proof}{Proof}
\newtheorem{definition}{Definition}
\newtheorem{remark}{Remark}
\def\Sp{{\scriptsize{\textcircled{{\emph{\tiny{\textbf{Sp}}}}}}}}
\def\etal{\textit{et al.}}
\def\ie{\textit{i.e.}}
\begin{document}

\title{Label Learning Method Based on Tensor Projection}

\author{Jing Li,
        Quanxue~Gao,
        Qianqian~Wang,
        Cheng~Deng,
        and~Deyan~Xie
\IEEEcompsocitemizethanks{
\IEEEcompsocthanksitem This work was supported in part by xxxxx. (Corresponding author: Q. Gao.)\protect
\IEEEcompsocthanksitem J. Li, Q. Gao, Q. Wang are with the State Key laboratory of Integrated Services Networks, Xidian University, Xi'an 710071, China (e-mail: qxgao@xidian.edu.cn).\protect

\protect}%
\thanks{Manuscript received XXXX; revised XXXX; accepted XXXX.}}

\markboth{IEEE TRANSACTIONS}%
{Shell \MakeLowercase{\textit{Xia et al.}}: Multi-View Clustering via Semi-non-negative Tensor Factorization}

\IEEEtitleabstractindextext{%
\begin{abstract}
Multi-view clustering method based on anchor graph has been widely concerned due to its high efficiency and effectiveness. In order to avoid post-processing, most of the existing anchor graph-based methods learn bipartite graphs with connected components. However, such methods have high requirements on parameters, and in some cases it may not be possible to obtain bipartite graphs with clear connected components. To end this, we propose a label learning method based on tensor projection (LLMTP). Specifically, we project anchor graph into the label space through an orthogonal projection matrix to obtain cluster labels directly. Considering that the spatial structure information of multi-view data may be ignored to a certain extent when projected in different views separately, we extend the matrix projection transformation to tensor projection, so that the spatial structure information between views can be fully utilized. In addition, we introduce the tensor Schatten $p$-norm regularization to make the clustering label matrices of different views as consistent as possible. Extensive experiments have proved the effectiveness of the proposed method.
\end{abstract}

\begin{IEEEkeywords}
Multi-View clustering, Anchor graph, Tensor Schatten $p$-norm, Tensor projection.
\end{IEEEkeywords}}

\maketitle
\IEEEdisplaynontitleabstractindextext
\IEEEpeerreviewmaketitle

\IEEEraisesectionheading{\section{Introduction}\label{sec:introduction}}
\IEEEPARstart{A}{s} one of the common techniques of data mining, clustering can be used to discover the internal structure and organization of data and divide them into different meaningful clusters. With the wide application of various sensors and other technologies, the description of the same object is more and more prone to diversification and isomerization. For example, an event can be described by a text, a picture, a voice and a video; A picture can be described with different features, such as GIST, CMT, HOG, etc. A section of path information in automatic driving can be represented by liDAR point cloud data, depth camera data, and infrared data, etc. Data like this can be called multi-view data. Multi-view clustering (MVC) is the operation of clustering multi-view data.

As an effective data mining method, multi-view clustering has been widely concerned \cite{yun2023low,zhang2023dropping,zhao2023contrastive,lu2024efficient,hao2024ensemble,yang2024discrete,wang2024incomplete}. One of the most representative methods is the graph-based multi-view clustering method \cite{tang2023multi,zhao2023auto,xiao2023adaptive,he2023self,mei2023joint,wang2024joint}. These method involve similarity graphs construct and eigen-decompose of Laplacian matrices, and the computational complexity is $\mathcal{O}(n^2)$ and $\mathcal{O}(n^3)$, respectively, where $n$ represents the number of samples. With the advent of the era of big data, data acquisition is becoming easier, resulting in the continuous expansion of the scale of datasets. So graph-based multi-view clustering methods have been somewhat difficult to deal with large-scale multi-view data. On this basis, anchor graph-based multi-view clustering methods have been proposed and widely used.

The core idea of the anchor graph-based method is to select $m$ representative points (called anchors) from the $n$ samples, and then learn the relationship between the samples and anchors (called anchor graph). The anchor graph is $n \times m$. Generally speaking, $m \ll n$, so the anchor graph-based multi-view clustering methods can handle large-scale multi-view data well.
In addition to being able to handle large-scale data, anchor graph-based methods also inherit the advantages of graph-based methods represented by spectral clustering, \ie, they are not affected by the geometric distribution of data. This method usually first explores the geometrical structure of multi-view data by constructing anchor graph, and then clusters on the anchor graph using existing clustering technics. However, the second step is still time-consuming.

To end this, another method using anchor graph clustering is proposed. Theses methods use anchor graphs to obtain bipartite graphs, and then learns bipartite graphs with $K$ connected components (where $K$ is the number of classes), so that the clustering results can be obtained directly without post-processing. However, these methods of learning bipartite graphs have high parameter requirements and may not find $K$ connected components in some cases.

In order to avoid the above problems and avoid post-processing, we consider the use of projection matrix to project the anchor graph directly into the label space, \ie, the $n \times m$ anchor graph is regarded as feature matrices with $n$ samples and $m$ feature dimensions, and the $n \times c$ cluster label matrix can be directly obtained after the projection transformation. Generally speaking, the above process requires the projection transformation of the anchor graph on each view, and then the clustering label matrix of each view is obtained, finally these matrices are fused to obtain the clustering results of multi-view data. However, projective transformation in each view separately may cause the spatial structure information between different views not fully utilized.

To end this, we propose a multi-view data label learning method based on tensor projection (LLMTP). In order to get the cluster labels directly from the anchor graph, we consider projecting the anchor graph into the label space, so that the cluster results can be obtained directly. Considering that the projection of each view separately will cause the model to be unable to make good use of the complementary information and spatial structure information between different views, we extend the matrix projection transformation of the anchor graph to the tensor projection transformation of the anchor graph tensor, \ie, project the third-order tensor directly. So that the spatial structure information embedded between different views can be preserved to a large extent. Thus, better clustering performance can be obtained. Extensive experiments have proved the superiority of our proposed model.

In summary, the main contributions of this paper are as follows:
\begin{itemize}
  \item It is proposed to project the anchor graph into the label space to obtain the clustering label directly. Meanwhile, the matrix projection is extended to tensor projection so that complementary information and spatial structure information between views can be fully mined.
  \item We propose an algorithm to optimize the tensor projection transformation, and verify the convergence of this algorithm from the experimental point of view.
  \item We introduce the tensor Schatten p-norm to exploit complementary information across different views, facilitating the derivation of a common consensus label matrix.
\end{itemize}

\section{Related Works}
\subsection{Anchor Graph-Based Multi-View Clustering Methods}
Since the computational complexity of $n \times n$ similarity graph is $\mathcal O (Vn^2)$ during graph construction, and $\mathcal O (n^3)$ during eigen decomposition of Laplace matrix (where $V$ and $n$ are the number of views and samples, respectively), it is difficult for graph-based multi-view clustering methods to deal with large-scale multi-view data. By using $m$ anchors to cover $n$ sample point clouds, and constructing the relationship between $n$ samples and $m$ anchors, the anchor graph is obtained. Because $m \ll n$, the multi-view clustering method based on anchor graph can deal with large-scale multi-view data effectively.

In general, anchor graph-based methods explore the geometry of multi-view data by constructing anchor graphs, and then it requires additional clustering techniques such as K-means to cluster the obtained anchor graph. However, the additional clustering techniques is time-consuming.

To avoid post-processing, it is common to use anchor graphs to learn bipartite graphs with K connected components (where K is the number of clusters). Therefore, post-processing can be avoided and clustering results can be obtained directly from bipartite graphs with K connected components. The representative methods are as follows.
MVGL \cite{zhan2018graph} gains insights from various single view graphs to construct a global graph. Instead of relying on post-processing techniques, it utilizes K connected components to extract clustering metrics. However, its scalability for handling large-scale data might be limited.
SFMC \cite{li2022multiview} introduces a parameter-free approach to fuse multi-view cluster graphs, resulting in cohesive composite graphs through the utilization of self-supervised weighting. Moreover, it employs K connected components to symbolize clusters.
LCBG \cite{zhou2022low} takes the intra-view and inter-view spatial low-rank structures of the learned bipartite graphs into account by minimizing tensor Schatten $p$-norm and nuclear norm.
MSC-BG \cite{yang2022multiview} leverages the Schatten $p$-norm to investigate the synergistic information across diverse views, and deriving clusters through K connecting components.
TBGL \cite{xia2023tensorized} employs the Schatten $p$-norm to delve into the resemblances among different views, while simultaneously integrating $\ell_{\textrm{1,2}}$-norm minimization regularization and connectivity constraints to investigate the similarities within each view.

But these methods of learning bipartite graphs have high parameter requirements and may not find $K$ connected components in some cases.

\subsection{Multi-View Clustering Method Based on Projection}
In the process of multi-view clustering, in most cases, we directly process the original data or the constructed similarity graph, but there may be redundant information, noise or outlier information in the original data or similarity graph. They will adversely affect the final clustering performance. Multi-view clustering methods based on projection are mostly studied to solve such problems.

The general practice of this kind of method is to construct an orthogonal projection matrix to projective the original data, and then get a relatively clean representation matrix in the embedded space.
Gao \etal \cite{gao2020multi} propose a new multi-view clustering framework that combines dimensionality reduction, manifold structure learning and feature selection, which maps high-dimensional data to low-dimensional spaces to reduce data complexity and reduce the effects of data noise and redundancy.
Wang \etal \cite{wang2020robust} proposed a robust self-weighted multi-view projection clustering (RSwMPC) based on $\ell_{\textrm{2,1}}$-norm. It can simultaneously reduce dimension, suppress noise and learn local structure graph. The resulting optimal graph can be directly used for clustering without any other processing.
Sang \etal \cite{sang2022consensus} proposed a consensus graph-based auto-weighted multi-view projection clustering (CGAMPC). It can simultaneously reduce dimension, save manifold structure and learn consensus structure graph. The information similarity graph is constructed on the projected data to ensure the removal of redundant and noisy information in the original similarity graph, and the $\ell_{\textrm{2,1}}$-norm is used to select adaptive discriminant features.
Wang \etal \cite{wang2022clustering} proposed consistency and diversity preserving with projection decomposition for multi-view clustering (CDP2D). It automatically learns the shared projection matrix and analyzes multi-view data through projection matrix decomposition.
Li \etal \cite{li2023projection} propose a projection-based coupled tensor learning method (PCTL). It constructs an orthogonal projection matrix to obtain the main feature information of the raw data of each view, and learns the representation matrix in a clean embedded space. Moreover, tensor learning is used to coupling projection matrix and representation matrix, mining higher-order information between views, and constructing more suitable and better representation of embedded space.

Inspired by the above method, we consider whether the $n \times m$ anchor graph can be directly regarded as an feature matrix with $n$ samples and $m$ dimensional features, according to \cite{wang2022align}. Then the feature matrix is directly projected into the label space through projection changes, and the final clustering label is obtained directly.

\section{Notations}\label{Notations}
We will cover t-product and the definition of the tensor Schatten $p$-norm in this section.
\begin{definition}[t-product \cite{kilmer2011factorization}]\label{def:t-prod}
Suppose ${\bm{\mathcal{A}}}\in\mathbb{R}^{n_1\times m\times n_3}$ and ${\bm{\mathcal{B}}}\in \mathbb{R}^{m\times n_2\times n_3}$, the t-product ${\bm{\mathcal{A}}}*{\bm{\mathcal{B}}}\in\mathbb{R}^{n_1\times n_2\times n_3}$ is given by
\begin{align*}
{\bm{\mathcal{A}}}*{\bm{\mathcal{B}}} = \mathrm{ifft}(\mathrm{bdiag}(\overline{\mathbf A}\overline{\mathbf B}),[\ ],3),
\end{align*}
where $\overline{\mathbf A}=\mathrm{bdiag}(\bm{\overline{\mathcal{A}}})$ and it denotes the block diagonal matrix. The blocks of $\overline{\mathbf A}$ are frontal slices of $\bm{\overline{\mathcal{A}}}$.
\end{definition}

\begin{definition}\label{tensorSpNorm}~\cite{gao2020enhanced}
Given ${\bm{\mathcal H}}\in{\mathbb{R}}^{n_1 \times n_2 \times n_3}$, $h=\min(n_1,n_2)$, the tensor Schatten $p$-norm of  ${\bm{\mathcal H}}$ is defined as
\begin{equation}
\begin{array}{c}
{\left\| {\bm{\mathcal H}} \right\|_{{\Sp}}} = {\left( {\sum\limits_{i = 1}^{{n_3}} {\left\| {{{\bm{\overline {\mathcal H} }^{(i)}}}} \right\|_{{\Sp}}^p} } \right)^{\frac{1}{p}}} = {\left( {\sum\limits_{i = 1}^{{n_3}} {\sum\limits_{j = 1}^h {{\sigma _j}{{\left( {{\bm{\overline {\mathcal H} }^{(i)}}} \right)}^p}} } } \right)^{\frac{1}{p}}},
\end{array}
\label{4}
\end{equation}
where, $0 < p \leqslant 1$, ${\sigma _j}(\overline{\bm{\mathcal H}}^{(i)})$ denotes the j-th singular value of $\overline{\bm{\mathcal H}}^{(i)}$.
\label{definition1}
\end{definition}
It should be pointed out that  for $0 < p \leqslant 1$ when $p$ is appropriately chosen, the  Schatten $p$-norm provides quite effective improvements for a tighter approximation of the rank function~\cite{zha2020benchmark,xie2016weighted}.

Also we introduce the notations used throughout this paper. We use bold calligraphy letters for 3rd-order tensors, ${\bm{\mathcal {H}}} \in{\mathbb{R}} {^{{n_1} \times {n_2} \times {n_3}}}$, bold upper case letters for matrices, ${\mathbf{H}}$, bold lower case letters for vectors, ${\bf{h}}$, and lower case letters such as ${h_{ijk}}$ for the entries of ${\bm{\mathcal {H}}}$. Moreover, the $i$-th frontal slice of ${\bm{\mathcal {H}}}$ is ${\bm{\mathcal {H}}}^{(i)}$. $\overline {{\bm{\mathcal {H}}}}$ is the discrete Fourier transform (DFT) of ${\bm{\mathcal {H}}}$ along the third dimension, $\overline {{\bm{\mathcal {H}}}} = \mathrm{fft}({{\bm{\mathcal H}}},[\ ],3)$. Thus, $\bm{{\mathcal H}} = \mathrm{ifft}({\overline {\bm{\mathcal H}}},[\ ],3)$. The trace and transpose of matrix $\mathbf{H}$ are expressed as $\mathrm{tr}(\mathbf{H})$ and  $\mathbf{H}^{\mathrm{T}}$. The F-norm of ${\bm{\mathcal H}}$ is denoted by ${\left\| {\bm{\mathcal H}}\right\|_F}$.

\section{Methodology}
\subsection{Motivation and Objective Function}
Multi-view clustering method based on anchor graph can deal with large-scale multi-view data efficiently. By learning a bipartite graph with K connected components using anchor graphs, the cluster labels can be obtained directly without any post-processing. However, this method has high requirements on parameters, and in some cases it may not be able to obtain a clear bipartite graph with K connected components.

Inspired by the excellent performance of the orthogonal projection matrix in processing redundant information in the raw data matrix, and referring to \cite{wang2022align}, we consider the $n\times m$ anchor graph as an feature matrix composed of $n$ $m$-dimensional feature vectors and the $n\times c$ cluster label matrix can be obtained directly by projecting the feature matrix (\ie anchor graph) into the label space (as shown in Figure \ref{projection}). In this way, the clustering results can be obtained directly without post-processing. When dealing with multi-view data we have the following formula:
\begin{equation}\label{f1}
    \begin{aligned}
        &\min \sum_{v=1}^{V} {\left\| {\mathbf{S}^{(v)}}  {\mathbf{G}^{(v)}} - {\mathbf{H}^{(v)}}  \right \|}_F^2 + \lambda \sum_{v=1}^{V} \mathcal R (\mathbf H^{(v)})  \\
        {\textrm{s.t.}} \quad  &{\mathbf{H}^{(v)}}\geqslant 0, {\mathbf{H}^{(v)}}^{\mathrm{T}}{\mathbf{H}^{(v)}} = {\mathbf{I}}, {\mathbf{G}^{(v)}}^{\mathrm{T}} {\mathbf{G}^{(v)}} = {\mathbf{I}},
    \end{aligned}
\end{equation}
where, ${\mathbf{S}^{(v)}}$ is the anchor graph from the $v$-th view, the anchors selection method and the anchor graph construction method are the same as that of \cite{xia2023tensorized}. ${\mathbf{G}^{(v)}}$ is the orthogonal projection matrix. ${\mathbf{H}^{(v)}}$ is the cluster label matrix. It can be more interpretable by constraining it to be non-negative and orthogonal, \ie, each row of ${\mathbf{H}^{(v)}}$ has only one non-zero value, and the position of the value indicates the cluster to which the corresponding element of the row belongs.
The purpose of the regular term with coefficient of $\lambda$ is to make the clustering label matrices of different views tend to be consistent.

\begin{figure}[htbp]
	\centering
	\includegraphics[width=1.0\linewidth]{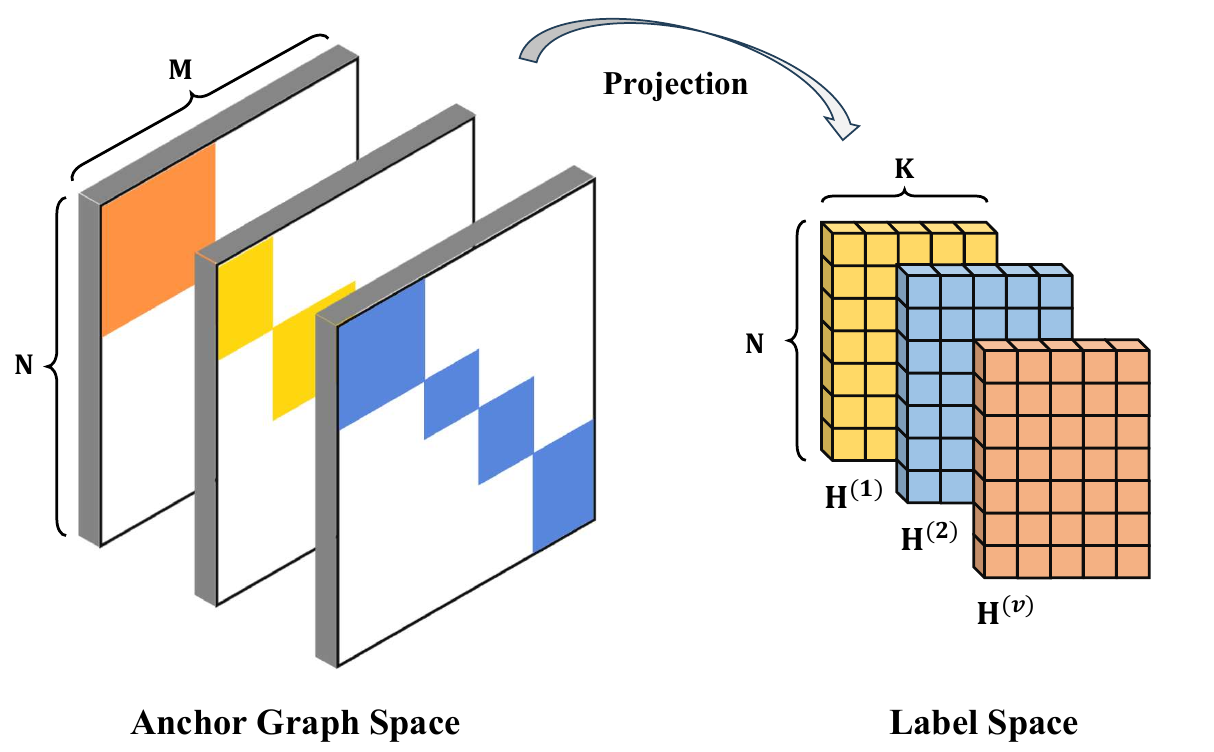}
	\caption{Anchor graph space to label space}
	\label{projection}
\end{figure}

Considering that the projection transformation in Eq. (\ref{f1}) is carried out separately in each view, the complementary information and spatial structure information embedded in different views may not be fully utilized. To end this, we consider extending the two-dimensional matrix projection into a third-order tensor projection.
Eq. (\ref{f1}) is extended as follows:
\begin{equation}\label{f2}
    \begin{aligned}
        &\min {\left\| \bm{\mathcal S}*\bm{\mathcal G} - \bm{\mathcal H} \right \|}_F^2 + \lambda \sum_{v=1}^{V} \mathcal R (\mathbf H^{(v)}) \\
        {\textrm{s.t.}} \quad  &\bm{\mathcal H}\geqslant 0, \bm{\mathcal H}^{\mathrm{T}}*\bm{\mathcal H} = \bm{\mathcal I}, \bm{\mathcal G}^{\mathrm{T}}*\bm{\mathcal G} = \bm{\mathcal I}
    \end{aligned}
\end{equation}

In multi-view clustering, we should try our best to make the ${\mathbf{H}^{(v)}}$ of different views in (\ref{f1}) tend to be the same. Inspired by the excellent performance of the tensor Schatten $p$-norm \cite{gao2020enhanced,yang2022multiview,xia2023tensorized,li2023orthogonal}, we fully explore the complementary information in label matrices of different views by introducing the regular term of the tensor Schatten $p$-norm.

The final objective function is as follows:

\begin{equation}\label{of}
    \begin{aligned}
        &\min {\left\| \bm{\mathcal S}*\bm{\mathcal G} - \bm{\mathcal H} \right \|}_F^2 + \lambda {\left \|{\bm{\mathcal H}}\right \|}_\Sp^p \\
        {\textrm{s.t.}} \quad  &\bm{\mathcal H}\geqslant 0, \bm{\mathcal H}^{\mathrm{T}}*\bm{\mathcal H} = \bm{\mathcal I}, \bm{\mathcal G}^{\mathrm{T}}*\bm{\mathcal G} = \bm{\mathcal I}
    \end{aligned}
\end{equation}
where $0<p \leqslant 1$, $\lambda$ is the hyper-parameter of the Schatten $p$-norm term. The construction process of the tensor ${\bm{\mathcal H}}$ is shown in Figure \ref{tensor}. Remark 1 briefly describes the role of the tensor Schatten $p$-norm.

\begin{remark}[Explanation of the tensor Schatten $p$-norm]\label{rTensorSp}
    For the tensor $\bm{\mathcal{H}}$, as depicted in Fig \ref{tensor}, its $k$-th lateral slice $\Theta^k$ represents the relationship of $n$ samples with the $k$-th cluster across different views. Multi-view clustering aims to harmonize the sample-cluster relationships in different views, making $\mathbf{H}_{:,k}^{(1)}, \cdots, \mathbf{H}_{:,k}^{(v)}$ as congruent as possible. However, the clustering structures often vary significantly across views. Applying the tensor Schatten $p$-norm to $\bm{\mathcal{H}}$ ensures that $\Phi^k$ maintains a spatially low-rank structure, leveraging complementary information across views and fostering consistency in the clustering labels.
\end{remark}

\begin{figure}[htbp]
	\centering
	\includegraphics[width=1.0\linewidth]{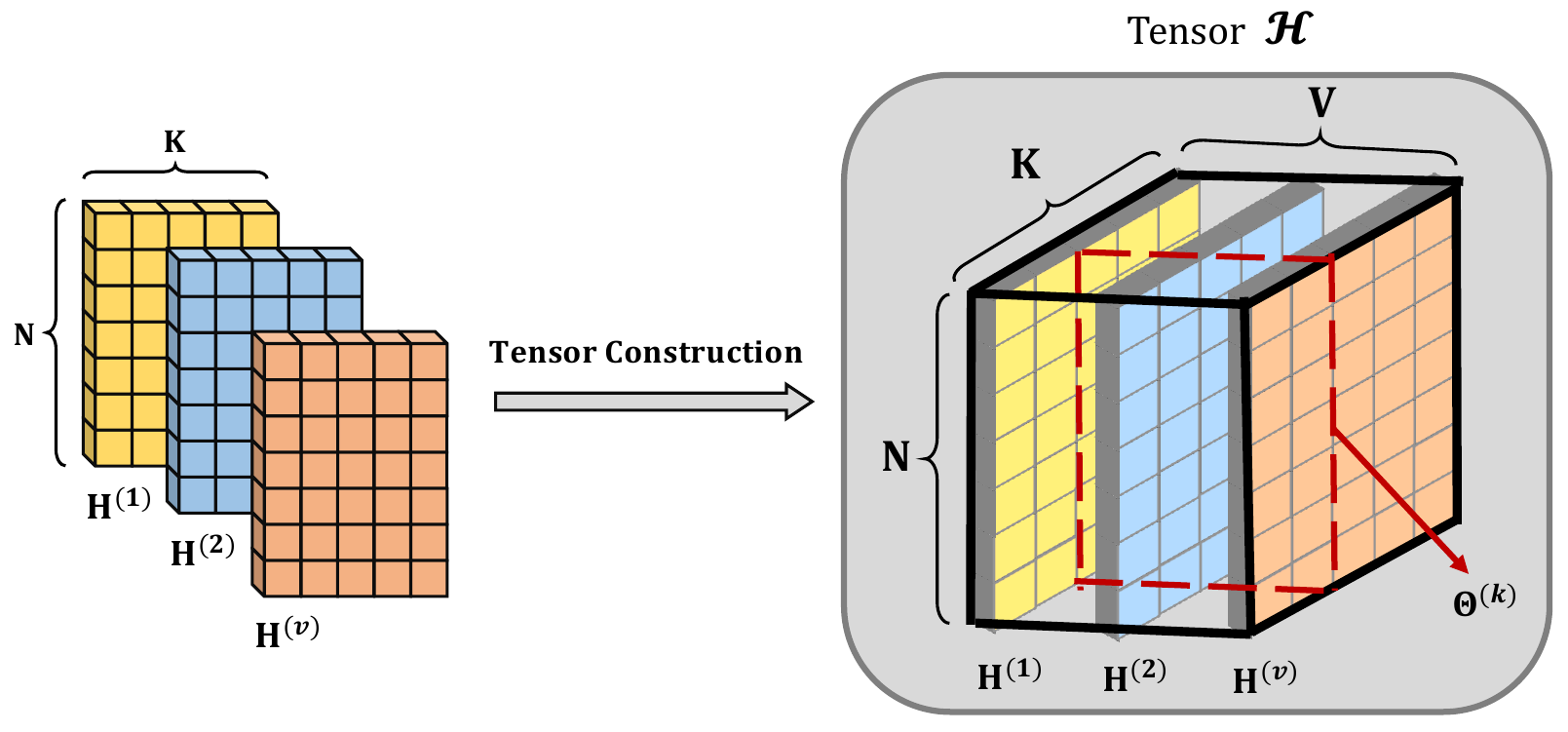}
	\caption{Tensor construction}
	\label{tensor}
\end{figure}

\subsection{Optimization}
Inspired by Augmented Lagrange Multiplier (ALM), we introduce two auxiliary variables $\bm{\mathcal Q}$ and $\bm{\mathcal J}$ and let $\bm{\mathcal H} = \bm{\mathcal Q}$, $\bm{\mathcal H} = \bm{\mathcal J}$, respectively, where $\bm{\mathcal Q}\geqslant 0$. Then, we rewrite the model as the  following unconstrained problem:
\begin{equation}\label{objective function}
	\begin{aligned}
		&\min {\left\| \bm{\mathcal S}*\bm{\mathcal G} - \bm{\mathcal H} \right \|}_F^2 + \lambda{\| \bm{\mathcal J}\|}_\Sp^p \\
        & + \frac{\mu}{2} {\left\| \bm{\mathcal H}-\bm{\mathcal Q} + \frac{\bm{\mathcal Y_1}}{\mu}\right \|}_F^2 + \frac{\rho}{2} {\left\| \bm{\mathcal H}-\bm{\mathcal J} + \frac{\bm{\mathcal Y_2}}{\rho}\right \|}_F^2 \\
   {\textrm{s.t.}} \quad  &\bm{\mathcal Q}\geqslant 0, \bm{\mathcal H}^{\mathrm{T}}*\bm{\mathcal H} = \bm{\mathcal I}, \bm{\mathcal G}^{\mathrm{T}}*\bm{\mathcal G} = \bm{\mathcal I}
	\end{aligned}
\end{equation}
where $\bm{\mathcal Y_1}$, $\bm{\mathcal Y_2}$ represent Lagrange multipliers and $\mu$, $\rho$ are the penalty parameters. The optimization process can therefore be separated into four steps:

$\bullet$\textbf{Solve $\bm{\mathcal G}$ with fixed $\bm{\mathcal Q}, \bm{\mathcal H}, \bm{\mathcal J}$.}  (\ref{objective function}) becomes:
\begin{equation}\label{solveG1}
	\begin{aligned}
		\min_{\bm{\mathcal G}^{\mathrm{T}}*\bm{\mathcal G} = \bm{\mathcal I}} {\left\| \bm{\mathcal S}*\bm{\mathcal G} - \bm{\mathcal H} \right \|}_F^2 ,
	\end{aligned}
\end{equation}

(\ref{solveG1}) is equivalent to the following in the frequency domain:
\begin{equation}\label{solveG2}
    \begin{aligned}
        \min_{(\bm{\mathcal {\overline{G}}}^{(v)})^{\mathrm{T}} \bm{\mathcal {\overline{G}}}^{(v)} = \mathbf I}
        &\sum_{v=1}^{V}{\left\| \bm{\mathcal {\overline{S}}}^{(v)} \bm{\mathcal {\overline{G}}}^{(v)} - \bm{\mathcal {\overline{H}}}^{(v)}  \right \|}_F^2,
    \end{aligned}
\end{equation}
where $\bm{\mathcal {\overline{G}}} = \mathrm{fft}({{\bm{\mathcal G}}},[\ ],3)$, and the others in the same way.

(\ref{solveG2}) can be reduced to:
\begin{equation}\label{solveG3}
    \begin{aligned}
        \min_{(\bm{\mathcal {\overline{G}}}^{(v)})^{\mathrm{T}} \bm{\mathcal {\overline{G}}}^{(v)} = \mathbf I}
        &\mathrm{tr} \left( (\bm{\mathcal {\overline{G}}}^{(v)})^{\mathrm{T}} (\bm{\mathcal {\overline{S}}}^{(v)})^{\mathrm{T}}\bm{\mathcal {\overline{S}}}^{(v)} \bm{\mathcal {\overline{G}}}^{(v)}  \right) \\
        &- 2\mathrm{tr} \left( (\bm{\mathcal {\overline{G}}}^{(v)})^{\mathrm{T}} (\bm{\mathcal {\overline{S}}}^{(v)})^{\mathrm{T}} \bm{\mathcal {\overline{H}}}^{(v)} \right),
    \end{aligned}
\end{equation}

(\ref{solveG3}) is equivalent to:
\begin{equation}\label{solveG4}
    \begin{aligned}
        \max_{(\bm{\mathcal {\overline{G}}}^{(v)})^{\mathrm{T}} \bm{\mathcal {\overline{G}}}^{(v)} = \mathbf I}
        &\mathrm{tr} \left( (\bm{\mathcal {\overline{G}}}^{(v)})^{\mathrm{T}} \bm{\mathcal {\overline{W}}}_1^{(v)} \bm{\mathcal {\overline{G}}}^{(v)}  \right) + 2\mathrm{tr} \left( (\bm{\mathcal {\overline{G}}}^{(v)})^{\mathrm{T}} \bm{\mathcal {\overline{W}}}_2^{(v)} \right),
    \end{aligned}
\end{equation}
where $\bm{\mathcal {\overline{W}}}_1^{(v)} = \beta \mathbf I - (\bm{\mathcal {\overline{S}}}^{(v)})^{\mathrm{T}}\bm{\mathcal {\overline{S}}}^{(v)} $ and $\bm{\mathcal {\overline{W}}}_2^{(v)} =  (\bm{\mathcal {\overline{S}}}^{(v)})^{\mathrm{T}} \bm{\mathcal {\overline{H}}}^{(v)} $, where $\beta$ is an arbitrary constant to ensure that $\bm{\mathcal {\overline{W}}}_1^{(v)}$ is a positive definite matrix.

To solve (\ref{solveG4}), we introduce the following Theorem:
\begin{theorem}\label{theorem solveG}
\cite{Xu2020low} For the model:
\begin{equation}\label{the1}
    \max_{ \mathbf G^{\mathrm{T}} \mathbf G =\mathbf I} \mathrm{tr}( \mathbf G^{\mathrm{T}} \mathbf B \mathbf G) + 2 \mathrm{tr}(\mathbf G^{\mathrm{T}} \mathbf K)
\end{equation}
$\mathbf G$ is solved iteratively and $\mathbf G^\ast=\mathbf U \mathbf V^{\mathrm{T}}$, where $\mathbf U$, $\mathbf V$ is from the SVD decomposition: $\mathbf U \mathbf X \mathbf V^{\mathrm{T}} = \mathbf B \mathbf G + \mathbf K$.
\end{theorem}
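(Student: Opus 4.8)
The plan is to exploit the special sign of the quadratic term. In the setting where this theorem is invoked $\mathbf{B}=\bm{\mathcal {\overline{W}}}_1^{(v)}$ is symmetric positive definite by the choice of the shift $\beta$, so $f(\mathbf{G}) := \mathrm{tr}(\mathbf{G}^{\mathrm{T}}\mathbf{B}\mathbf{G}) + 2\,\mathrm{tr}(\mathbf{G}^{\mathrm{T}}\mathbf{K})$ is a (strongly) convex function of $\mathbf{G}$. Maximizing a convex function over the Stiefel manifold $\mathcal{M}=\{\mathbf{G}:\mathbf{G}^{\mathrm{T}}\mathbf{G}=\mathbf{I}\}$ is naturally handled by successive linearization: at the current iterate $\mathbf{G}_t$, convexity yields the global minorant $f(\mathbf{G})\ge f(\mathbf{G}_t)+\langle\nabla f(\mathbf{G}_t),\,\mathbf{G}-\mathbf{G}_t\rangle$ with $\nabla f(\mathbf{G}_t)=2(\mathbf{B}\mathbf{G}_t+\mathbf{K})$, and I would define $\mathbf{G}_{t+1}$ to be a maximizer over $\mathcal{M}$ of that right-hand side, i.e. a maximizer of $\mathrm{tr}\big(\mathbf{G}^{\mathrm{T}}(\mathbf{B}\mathbf{G}_t+\mathbf{K})\big)$. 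This is exactly the iterative rule in the statement, once the linear subproblem is solved in closed form.

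The key lemma to prove next is the orthogonal-Procrustes fact: for any $\mathbf{M}$ with thin SVD $\mathbf{M}=\mathbf{U}\mathbf{X}\mathbf{V}^{\mathrm{T}}$ one has $\max_{\mathbf{G}\in\mathcal{M}}\mathrm{tr}(\mathbf{G}^{\mathrm{T}}\mathbf{M})=\sum_i\sigma_i(\mathbf{M})$, attained at $\mathbf{G}^\ast=\mathbf{U}\mathbf{V}^{\mathrm{T}}$. I would argue directly: $\mathrm{tr}(\mathbf{G}^{\mathrm{T}}\mathbf{M})=\mathrm{tr}(\mathbf{X}\,\mathbf{V}^{\mathrm{T}}\mathbf{G}^{\mathrm{T}}\mathbf{U})=\sum_i\sigma_i(\mathbf{M})\,[\mathbf{Z}]_{ii}$ where $\mathbf{Z}=\mathbf{V}^{\mathrm{T}}\mathbf{G}^{\mathrm{T}}\mathbf{U}$ is a product of matrices with orthonormal columns, hence has columns of Euclidean norm at most $1$ and thus $|[\mathbf{Z}]_{ii}|\le 1$; therefore $\mathrm{tr}(\mathbf{G}^{\mathrm{T}}\mathbf{M})\le\sum_i\sigma_i(\mathbf{M})$, with equality when $\mathbf{Z}=\mathbf{I}$, i.e. $\mathbf{G}=\mathbf{U}\mathbf{V}^{\mathrm{T}}$. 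Specializing to $\mathbf{M}=\mathbf{B}\mathbf{G}_t+\mathbf{K}=:\mathbf{U}\mathbf{X}\mathbf{V}^{\mathrm{T}}$ gives $\mathbf{G}_{t+1}=\mathbf{U}\mathbf{V}^{\mathrm{T}}$, the claimed closed form.

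Third, I would verify that the scheme is monotone and convergent. Since $\mathbf{G}_{t+1}$ maximizes the linear term over $\mathcal{M}$ and $\mathbf{G}_t\in\mathcal{M}$, we get $\langle\nabla f(\mathbf{G}_t),\,\mathbf{G}_{t+1}-\mathbf{G}_t\rangle\ge 0$, so the minorant inequality gives $f(\mathbf{G}_{t+1})\ge f(\mathbf{G}_t)$. As $f$ is continuous and $\mathcal{M}$ is compact, $\{f(\mathbf{G}_t)\}$ is bounded and nondecreasing, hence convergent; using strong convexity of $f$ to lower-bound $f(\mathbf{G}_{t+1})-f(\mathbf{G}_t)$ by a positive multiple of $\|\mathbf{G}_{t+1}-\mathbf{G}_t\|_F^2$ shows $\|\mathbf{G}_{t+1}-\mathbf{G}_t\|_F\to 0$, from which one concludes, via the first-order optimality conditions of the linearized subproblem, that limit points of $\{\mathbf{G}_t\}$ are stationary points of the original problem.

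The main obstacle is conceptual rather than computational: maximizing a convex quadratic over the nonconvex manifold $\mathcal{M}$ cannot be certified to reach the global optimum, so the honest conclusion is monotone convergence to a KKT/stationary point, and I would phrase the theorem accordingly rather than assert global optimality. A secondary technical point I would flag is the non-uniqueness of $\mathbf{U},\mathbf{V}$ when $\mathbf{M}$ is rank-deficient or has repeated singular values: there $\mathbf{U}\mathbf{V}^{\mathrm{T}}$ must be read as ``any pair of consistent singular-vector matrices'', and every such choice still attains $\sum_i\sigma_i(\mathbf{M})$ and hence still furnishes a valid ascent step.
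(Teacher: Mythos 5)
Your proposal is correct, and it is essentially the intended argument: the paper itself gives no proof of this theorem (it is quoted from the cited reference), and the standard proof behind it is exactly your minorize--maximize scheme --- linearize the convex quadratic (with $\mathbf B$ made symmetric positive definite via the shift $\beta$), solve the linearized subproblem in closed form by the orthogonal-Procrustes/SVD step $\mathbf G_{t+1}=\mathbf U\mathbf V^{\mathrm{T}}$ with $\mathbf U\mathbf X\mathbf V^{\mathrm{T}}=\mathbf B\mathbf G_t+\mathbf K$, and conclude monotone ascent and convergence of the objective on the compact Stiefel manifold. Your Procrustes sub-lemma is also the same trace/SVD bound the paper uses to prove its Theorem 2, and your caveat is well taken: the iteration can only be guaranteed to converge to a stationary (KKT) point, not a certified global maximizer, which is indeed how the theorem should be read.
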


According to Theorem \ref{theorem solveG}, the $\bm{\mathcal {\overline{G}}}^{(v)}$ can be solved iteratively
and the solution is:
\begin{equation}\label{solveG}
    \bm{\mathcal {\overline{G}}}^{\ast(v)} = \bm{\mathcal {\overline{U}}}^{(v)} (\bm{\mathcal {\overline{V}}}^{(v)})^{\mathrm{T}},
\end{equation}
where $\bm{\mathcal {\overline{U}}}^{(v)} \bm{\mathcal {\overline{X}}}^{(v)} (\bm{\mathcal {\overline{V}}}^{(v)})^{\mathrm{T}} = \bm{\mathcal {\overline{W}}}_1^{(v)} \bm{\mathcal {\overline{G}}}^{(v)} + \bm{\mathcal {\overline{W}}}_2^{(v)}$.

$\bullet$\textbf{Solve $\bm{\mathcal H}$ with fixed $\bm{\mathcal Q}, \bm{\mathcal G}, \bm{\mathcal J}$.}  (\ref{objective function}) becomes:
\begin{equation}\label{solveH1}
	\begin{aligned}
		\min_{\bm{\mathcal H}^{\mathrm{T}}*\bm{\mathcal H} = \bm{\mathcal I}} {\left\| \bm{\mathcal S}*\bm{\mathcal G} - \bm{\mathcal H} \right \|}_F^2
        &+ \frac{\mu}{2} {\left\| \bm{\mathcal H}-\bm{\mathcal Q} + \frac{\bm{\mathcal Y_1}}{\mu}\right \|}_F^2 \\
        &+ \frac{\rho}{2} {\left\| \bm{\mathcal H}-\bm{\mathcal J} + \frac{\bm{\mathcal Y_2}}{\rho}\right \|}_F^2,
	\end{aligned}
\end{equation}

(\ref{solveH1}) is equivalent to the following in the frequency domain:
\begin{equation}\label{solveH2}
    \begin{aligned}
        \min_{(\bm{\mathcal {\overline{H}}}^{(v)})^{\mathrm{T}} \bm{\mathcal {\overline{H}}}^{(v)} = \mathbf I}
        &\sum_{v=1}^{V}{\left\| \bm{\mathcal {\overline{S}}}^{(v)} \bm{\mathcal {\overline{G}}}^{(v)} - \bm{\mathcal {\overline{H}}}^{(v)}  \right \|}_F^2 \\
        & + \sum_{v=1}^{V} \frac{\mu}{2} {\left\| \bm{\mathcal {\overline{H}}}^{(v)} - \bm{\mathcal {\overline{Q}}}^{(v)} + \frac{\bm{\mathcal {\overline{Y}}}_1^{(v)}}{\mu}\right \|}_F^2 \\
        & + \sum_{v=1}^{V} \frac{\rho}{2} {\left\| \bm{\mathcal {\overline{H}}}^{(v)} - \bm{\mathcal {\overline{J}}}^{(v)} + \frac{\bm{\mathcal {\overline{Y}}}_2^{(v)}}{\rho}\right \|}_F^2,
    \end{aligned}
\end{equation}
where $\bm{\mathcal {\overline{H}}} = \mathrm{fft}({{\bm{\mathcal H}}},[\ ],3)$, and the others in the same way.

(\ref{solveH2}) can be reduced to:
\begin{equation}\label{solveH3}
    \begin{aligned}
        \min_{(\bm{\mathcal {\overline{H}}}^{(v)})^{\mathrm{T}} \bm{\mathcal {\overline{H}}}^{(v)} = \mathbf I}
        &-2\mathrm{tr} \left( (\bm{\mathcal {\overline{H}}}^{(v)})^{\mathrm{T}} \bm{\mathcal {\overline{S}}}^{(v)} \bm{\mathcal {\overline{G}}}^{(v)}  \right) - \mu \mathrm{tr} \left( (\bm{\mathcal {\overline{H}}}^{(v)})^{\mathrm{T}} \bm{\mathcal {\overline{W}}}_3^{(v)} \right) \\
        & - \rho \mathrm{tr} \left( (\bm{\mathcal {\overline{H}}}^{(v)})^{\mathrm{T}} \bm{\mathcal {\overline{W}}}_4^{(v)} \right)
    \end{aligned}
\end{equation}
where $\bm{\mathcal {\overline{W}}}_3^{(v)} =  \bm{\mathcal {\overline{Q}}}^{(v)} - \frac{ \bm{\mathcal {\overline{Y}}}^{(v)}_1}{\mu}$ and $\bm{\mathcal {\overline{W}}}_4^{(v)} =  \bm{\mathcal {\overline{J}}}^{(v)} - \frac{ \bm{\mathcal {\overline{Y}}}^{(v)}_2}{\rho}$.

(\ref{solveH3}) can be reduced to:
\begin{equation}\label{solveH4}
    \max_{(\bm{\mathcal {\overline{H}}}^{(v)})^{\mathrm{T}} \bm{\mathcal {\overline{H}}}^{(v)} = \mathbf I}
    \mathrm{tr} \left( (\bm{\mathcal {\overline{H}}}^{(v)})^{\mathrm{T}} \bm{\mathcal {\overline{A}}}^{(v)} \right)
\end{equation}
where $\bm{\mathcal {\overline{A}}}^{(v)} = 2 \bm{\mathcal {\overline{S}}}^{(v)} \bm{\mathcal {\overline{G}}}^{(v)} + \mu \bm{\mathcal {\overline{W}}}_3^{(v)} + \rho \bm{\mathcal {\overline{W}}}_4^{(v)}$.

To solve (\ref{solveH4}), we introduce the following Theorem:
\begin{theorem}\label{theorem solveF}
Given $\mathbf G$ and $\mathbf P$, where $\mathbf G (\mathbf G)^{\mathrm{T}}=\mathbf I$ and $\mathbf P$ has the singular value decomposition $\mathbf P=\mathbf \Lambda \mathbf S(\mathbf V)^{\mathrm{T}}$, then the optimal solution of
\begin{equation}\label{the2}
\max_{\mathbf G (\mathbf G)^{\mathrm{T}}=\mathbf I} \mathrm{tr}( \mathbf G \mathbf P)
\end{equation}
is $\mathbf G^\ast=\mathbf V[\mathbf I,\mathbf 0](\mathbf \Lambda)^{\mathrm{T}}$.
\end{theorem}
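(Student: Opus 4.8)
\textbf{Proof proposal for Theorem \ref{theorem solveF}.}

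The plan is to reduce the trace-maximization problem to a known orthogonal Procrustes-type argument via the singular value decomposition of $\mathbf P$. First I would substitute $\mathbf P = \mathbf \Lambda \mathbf S (\mathbf V)^{\mathrm{T}}$ into the objective and use the cyclic property of the trace to write $\mathrm{tr}(\mathbf G \mathbf P) = \mathrm{tr}(\mathbf G \mathbf \Lambda \mathbf S (\mathbf V)^{\mathrm{T}}) = \mathrm{tr}(\mathbf S (\mathbf V)^{\mathrm{T}} \mathbf G \mathbf \Lambda)$. Setting $\mathbf Z = (\mathbf V)^{\mathrm{T}} \mathbf G \mathbf \Lambda$, I would check that $\mathbf Z$ inherits an orthogonality/sub-orthogonality property from $\mathbf G (\mathbf G)^{\mathrm{T}} = \mathbf I$ together with the orthogonality of the SVD factors $\mathbf \Lambda$ and $\mathbf V$, so that every entry of $\mathbf Z$ has modulus at most one; consequently $\mathrm{tr}(\mathbf S \mathbf Z) = \sum_j \sigma_j z_{jj} \le \sum_j \sigma_j$, where $\sigma_j \ge 0$ are the diagonal entries of the (rectangular) singular-value matrix $\mathbf S$.

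Next I would exhibit a feasible $\mathbf G$ that attains this upper bound. Plugging in the claimed optimizer $\mathbf G^\ast = \mathbf V [\mathbf I, \mathbf 0] (\mathbf \Lambda)^{\mathrm{T}}$, I would first verify feasibility: $\mathbf G^\ast (\mathbf G^\ast)^{\mathrm{T}} = \mathbf V [\mathbf I,\mathbf 0] (\mathbf \Lambda)^{\mathrm{T}} \mathbf \Lambda [\mathbf I,\mathbf 0]^{\mathrm{T}} (\mathbf V)^{\mathrm{T}} = \mathbf V [\mathbf I,\mathbf 0][\mathbf I,\mathbf 0]^{\mathrm{T}} (\mathbf V)^{\mathrm{T}} = \mathbf V \mathbf I (\mathbf V)^{\mathrm{T}} = \mathbf I$, using $(\mathbf \Lambda)^{\mathrm{T}} \mathbf \Lambda = \mathbf I$ and $[\mathbf I,\mathbf 0][\mathbf I,\mathbf 0]^{\mathrm{T}} = \mathbf I$. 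Then I would compute the objective value: $\mathrm{tr}(\mathbf G^\ast \mathbf P) = \mathrm{tr}(\mathbf V [\mathbf I,\mathbf 0] (\mathbf \Lambda)^{\mathrm{T}} \mathbf \Lambda \mathbf S (\mathbf V)^{\mathrm{T}}) = \mathrm{tr}(\mathbf V [\mathbf I,\mathbf 0] \mathbf S (\mathbf V)^{\mathrm{T}}) = \mathrm{tr}([\mathbf I,\mathbf 0]\mathbf S) = \sum_j \sigma_j$, again by cyclic invariance and $(\mathbf \Lambda)^{\mathrm{T}}\mathbf \Lambda = \mathbf I$. Since this matches the upper bound from the first paragraph, $\mathbf G^\ast$ is optimal.

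The main obstacle I anticipate is bookkeeping with the non-square shapes: $\mathbf G$ is a "wide" matrix with $\mathbf G (\mathbf G)^{\mathrm{T}} = \mathbf I$ (rows orthonormal, not columns), and $\mathbf S$ is the rectangular singular-value matrix, so the block $[\mathbf I,\mathbf 0]$ and the identities $(\mathbf \Lambda)^{\mathrm{T}}\mathbf \Lambda = \mathbf I$ versus $\mathbf \Lambda (\mathbf \Lambda)^{\mathrm{T}} = \mathbf I$ must be tracked carefully to be sure the dimensions of $[\mathbf I,\mathbf 0]$ conform in each product. The one genuine inequality — that $|z_{jj}| \le 1$ for the relevant diagonal entries — needs the Cauchy–Schwarz estimate applied to rows/columns of the partial isometry $\mathbf Z$; I would state this as a short lemma (each diagonal entry of a matrix whose rows have norm at most one and whose columns have norm at most one is bounded by one in absolute value) rather than expanding it inline. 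Everything else is cyclic-trace manipulation, and I would keep those computations compressed.
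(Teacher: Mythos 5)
Your proposal is correct and follows essentially the same route as the paper's proof: substitute the SVD, use cyclic invariance of the trace to reduce to $\mathrm{tr}(\mathbf S \mathbf Z)$ with $\mathbf Z=(\mathbf V)^{\mathrm{T}}\mathbf G\mathbf \Lambda$ a partial isometry, bound the diagonal entries by one, and conclude $\mathrm{tr}(\mathbf G\mathbf P)\leqslant \sum_i s_{ii}$. Your explicit check that $\mathbf G^\ast=\mathbf V[\mathbf I,\mathbf 0](\mathbf \Lambda)^{\mathrm{T}}$ is feasible and attains the bound is a welcome (if minor) tightening of the paper's closing sentence, which only asserts that the maximum is reached when $\mathbf Z=[\mathbf I,\mathbf 0]$.
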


\begin{proof}
From the SVD $\mathbf P=\mathbf \Lambda \mathbf S (\mathbf V)^{\mathrm{T}}$ and together with  (\ref{the1}), it is evident that
\begin{equation}\label{solveF5}
\begin{aligned}
		\mathrm{tr}(\mathbf G \mathbf P ) &= \mathrm{tr}(\mathbf G\mathbf \Lambda \mathbf S(\mathbf V)^{\mathrm{T}}  ) \\
                                          &=\mathrm{tr}(\mathbf S (\mathbf V)^{\mathrm{T}} \mathbf G \mathbf \Lambda  ) \\
		                                  &=\mathrm{tr}(\mathbf S \mathbf H )  \\
                                          &=\sum_i s_{ii}  h_{ii}
	,\end{aligned}
\end{equation}
where $\mathbf H=(\mathbf V)^{\mathrm{T}} \mathbf G \mathbf \Lambda$, $s_{ii}$ and $h_{ii}$ are the $(i,i)$ elements of $\mathbf S$ and $\mathbf H$, respectively. It can be easily verified that $\mathbf H (\mathbf H)^{\mathrm{T}}=\mathbf I$, where $\mathbf I$ is an identity matrix. Therefore $-1\leqslant h_{ii} \leqslant 1$ and $s_{ii} \geqslant 0$, Thus we have:
\begin{equation}\label{solveF6}
\mathrm{tr}(\mathbf G \mathbf P)=\sum_i s_{ii} h_{ii} \leqslant \sum_i s_{ii}.
\end{equation}
The equality holds when $\mathbf H$ is an identity matrix. $\mathrm{tr}(\mathbf G \mathbf P)$ reaches the maximum when $\mathbf H = [\mathbf I,\mathbf 0]$.
\end{proof}

According to Theorem \ref{theorem solveF} the solution of (\ref{solveH4}) is:
\begin{equation}\label{solveH}
    \bm{\mathcal {\overline{H}}}^{(v)} = \bm{{\overline{\Lambda}}}^{(v)} (\bm{{\overline{V}}}^{(v)})^{\mathrm{T}}
\end{equation}
where $\bm{{\overline{\Lambda}}}^{(v)}$ and $\bm{{\overline{V}}}^{(v)}$ can be obtained by SVD $\bm{\mathcal {\overline{A}}}^{(v)}=\bm{{\overline{\Lambda}}}^{(v)} \mathbf X (\bm{{\overline{V}}}^{(v)})^{\mathrm{T}}$

$\bullet$\textbf{Solve $\bm{\mathcal Q}$ with fixed $\bm{\mathcal H}, \bm{\mathcal G}, \bm{\mathcal J}$.}  (\ref{objective function}) becomes:
\begin{equation}\label{solveQ1}
	\begin{aligned}
		\min_{\bm{\mathcal Q}\geqslant 0} = \frac{\mu}{2} {\left\| \bm{\mathcal H}-\bm{\mathcal Q} + \frac{\bm{\mathcal Y_1}}{\mu}\right \|}_F^2,
	\end{aligned}
\end{equation}

(\ref{solveQ1}) is obviously equivalent to:
\begin{equation}\label{solveQ2}
	\begin{aligned}
		\min_{\bm{\mathcal Q}\geqslant 0}
        \frac{\mu}{2} {\left\| (\bm{\mathcal H} + \frac{\bm{\mathcal Y}_1}{\mu}) - \bm{\mathcal {Q}} \right \|}_F^2
	\end{aligned}
\end{equation}

According to \cite{yang2021fast}, the solution of  (\ref{solveQ2}) is:
\begin{equation}\label{solveQ}
    \bm{\mathcal {Q}} = \left(\bm{\mathcal H} + \frac{\bm{\mathcal Y}_1}{\mu} \right)_+
\end{equation}

$\bullet$\textbf{Solve $\bm{\mathcal J}$ with fixed $\bm{\mathcal H}, \bm{\mathcal G}, \bm{\mathcal Q}$.}  (\ref{objective function}) becomes:
\begin{equation}\label{solveJ1}
	\begin{aligned}
		\min = \lambda{\| \bm{\mathcal J}\|}_\Sp^p + \frac{\rho}{2} {\left\| \bm{\mathcal H}-\bm{\mathcal J} + \frac{\bm{\mathcal Y_2}}{\rho}\right \|}_F^2,
	\end{aligned}
\end{equation}

We can deduce
\begin{equation}\label{solveJ}
	\begin{aligned}
		\bm{\mathcal J}^* = \arg \min \frac{1}{2}\left\|{\bm{\mathcal H} + \frac{\bm{\mathcal Y}_2}{\rho} - \bm{\mathcal J}}\right\|_F^2 + \frac{\lambda}{\rho}{\|\bm{\mathcal J}\|}_\Sp^p,
	\end{aligned}
\end{equation}
which has a closed-form solution as Lemma \ref{T2} \cite{gao2020enhanced}:

\begin{lemma}\label{T2}
    Let ${\mathcal Z} \in {\mathbb{R}}{^{{n_1} \times {n_2} \times {n_3}}}$ have a t-SVD ${\mathcal Z} = {\mathcal U} * {\mathcal S} * {{\mathcal V}^{\mathrm{T}}}$, then the optimal solution for
    \begin{equation}\label{tensor-gaozx-2020}
        \begin{array}{l}
            \mathop {\min }\limits_{\mathcal X} \frac{1}{2}\left\| {{\mathcal X} - {\mathcal Z}} \right\|_F^2 + \tau \left\| {\mathcal X} \right\|_{{\Sp}}^p.
        \end{array}
    \end{equation}
is ${{\mathcal X}^*} = {\Gamma _\tau }({\mathcal Z}) = {\mathcal U}*\mathrm{ifft}({P_\tau }(\overline {\mathcal Z} ))*{{\mathcal V}^{\mathrm{T}}}$, where ${P_\tau }(\overline {\mathcal Z} )$ is an f-diagonal  3rd-order tensor, whose diagonal elements can be found by using the GST algorithm introduced in \cite{gao2020enhanced}.
\end{lemma}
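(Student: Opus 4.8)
The plan is to move to the Fourier domain along the third mode, where the squared Frobenius distance and the tensor Schatten $p$-norm both decouple over frontal slices, so that problem~(\ref{tensor-gaozx-2020}) splits into $n_3$ independent matrix proximal problems. Concretely, I would use the Plancherel identity for the (unnormalized) DFT, $\|\mathcal{X}-\mathcal{Z}\|_F^2 = \frac{1}{n_3}\sum_{i=1}^{n_3}\|\overline{\mathcal{X}}^{(i)}-\overline{\mathcal{Z}}^{(i)}\|_F^2$, together with Definition~\ref{definition1}, which gives $\|\mathcal{X}\|_{\Sp}^p = \sum_{i=1}^{n_3}\|\overline{\mathcal{X}}^{(i)}\|_{\Sp}^p$. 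Since the map $\mathcal{X}\mapsto\overline{\mathcal{X}}$ is a linear bijection, minimizing over $\mathcal{X}$ amounts to minimizing over the slices $\overline{\mathcal{X}}^{(i)}$ separately, and the $i$-th subproblem is $\min_{\mathbf{X}\in\mathbb{C}^{n_1\times n_2}} \frac{1}{2}\|\mathbf{X}-\overline{\mathcal{Z}}^{(i)}\|_F^2 + \tau n_3\,\|\mathbf{X}\|_{\Sp}^p$.

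Next, for one such slice I would prove the matrix-level statement: if $\mathbf{Z}=\mathbf{U}\mathbf{\Sigma}\mathbf{V}^{\mathrm{H}}$ then $\min_{\mathbf{X}}\frac{1}{2}\|\mathbf{X}-\mathbf{Z}\|_F^2 + c\,\|\mathbf{X}\|_{\Sp}^p$ is attained by some $\mathbf{X}^\ast=\mathbf{U}\,\mathrm{diag}(\gamma_1,\dots,\gamma_h)\,\mathbf{V}^{\mathrm{H}}$ sharing the singular vectors of $\mathbf{Z}$, with each $\gamma_j$ a global minimizer of the scalar problem $\min_{\gamma\ge 0}\frac{1}{2}(\gamma-\sigma_j(\mathbf{Z}))^2 + c\,\gamma^p$. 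The tools are unitary invariance of $\|\cdot\|_{\Sp}^p$ and von Neumann's trace inequality: expanding $\frac{1}{2}\|\mathbf{X}-\mathbf{Z}\|_F^2 = \frac{1}{2}\sum_j\sigma_j(\mathbf{X})^2 - \mathrm{Re}\,\langle\mathbf{X},\mathbf{Z}\rangle + \frac{1}{2}\|\mathbf{Z}\|_F^2$ and bounding $\mathrm{Re}\,\langle\mathbf{X},\mathbf{Z}\rangle\le\sum_j\sigma_j(\mathbf{X})\sigma_j(\mathbf{Z})$, with equality exactly when $\mathbf{X}$ and $\mathbf{Z}$ share ordered singular vectors, reduces the objective to $\sum_j\big(\frac{1}{2}(\sigma_j(\mathbf{X})-\sigma_j(\mathbf{Z}))^2 + c\,\sigma_j(\mathbf{X})^p\big)$, i.e.\ the separable scalar problems. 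Each scalar problem is exactly the one the generalized soft-thresholding (GST) routine of \cite{gao2020enhanced} solves, so $\gamma_j = \mathrm{GST}(\sigma_j(\mathbf{Z}),c,p)$; stacking these values into f-diagonal frontal slices is what defines $P_\tau(\overline{\mathcal{Z}})$.

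Finally I would reassemble the tensor solution. Since the t-SVD $\mathcal{Z}=\mathcal{U}*\mathcal{S}*\mathcal{V}^{\mathrm{T}}$ is by construction the slice-wise SVD of $\overline{\mathcal{Z}}$ in the Fourier domain, the unitaries appearing in the per-slice SVDs above are precisely the frontal slices of $\overline{\mathcal{U}}$ and $\overline{\mathcal{V}}$, so the optimal $i$-th slice is $\overline{\mathcal{X}}^{\ast(i)}=\overline{\mathcal{U}}^{(i)}\,[P_\tau(\overline{\mathcal{Z}})]^{(i)}\,(\overline{\mathcal{V}}^{(i)})^{\mathrm{H}}$; recognizing this collection of slice products as the Fourier-domain form of a t-product and applying $\mathrm{ifft}(\cdot,[\ ],3)$ yields $\mathcal{X}^\ast=\mathcal{U}*\mathrm{ifft}(P_\tau(\overline{\mathcal{Z}}))*\mathcal{V}^{\mathrm{T}}$. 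One routine check is that the conjugate symmetry of $\overline{\mathcal{Z}}$ inherited from $\mathcal{Z}$ being real is preserved, since GST applies a fixed real scalar map to singular values, so that $\mathcal{X}^\ast$ comes out real.

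The step I expect to be the main obstacle is the matrix-level reduction when $0<p\le 1$ with $p<1$: then $\gamma\mapsto c\,\gamma^p$ is concave and non-differentiable at the origin, so each scalar subproblem is non-convex with $0$ a competing minimizer below a threshold on $\sigma_j(\mathbf{Z})$, and the ``singular values in, singular values out'' principle must be invoked in its non-convex form (valid for any unitarily invariant penalty) rather than the classical convex one. Some additional care is needed to verify that $\sigma\mapsto\mathrm{GST}(\sigma,c,p)$ is monotone nondecreasing, which is what guarantees that pairing the optimal $\gamma_j$ with the $j$-th singular vectors of $\mathbf{Z}$ actually attains the von Neumann bound, and that zero or repeated singular values are handled consistently.
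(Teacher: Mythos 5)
Your proposal is correct: the Fourier-domain decoupling via Plancherel, the reduction of each frontal-slice problem to separable scalar problems through unitary invariance and von Neumann's trace inequality, solving those by GST, and reassembling the slices through the slice-wise SVD structure of the t-SVD is exactly the standard argument behind this result. Note that the paper itself offers no proof of Lemma~\ref{T2} — it imports it directly from \cite{gao2020enhanced} — and your sketch essentially reconstructs that cited proof, with the added care (correctly) about the nonconvex case $p<1$, monotonicity of the GST map, realness of $\mathcal{X}^\ast$, and the per-slice threshold $\tau n_3$ implied by this paper's normalization of the tensor Schatten $p$-norm.
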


The solution of  (\ref{solveJ}) is:
\begin{equation}\label{21}
    \bm{\mathcal J}^* = {\Gamma _{\frac{\lambda}{\rho}}} (\bm{\mathcal H} + \frac{\bm{\mathcal Y}_2}{\rho}).
\end{equation}

Finally, the optimization procedure for Label Learning Method Based on Tensor Projection (LLMTP) is outlined in Algorithm \ref{A1}.
\begin{algorithm}[tb]
\caption{Label Learning Method Based on Tensor Projection (LLMTP)}
\label{A1}
\begin{algorithmic}[1] 
\REQUIRE Data matrices $\{{\mathbf{X}}^{(v)}\}_{v=1}^{V}\in \mathbb{R}^{N\times d_v}$; anchors numbers $m$; cluster number $K$.
\ENSURE Cluster labels $\mathbf{Y}$ of each data points.
\STATE \textbf{Initialize}: $\mu=10^{-5}$, $\rho=10^{-5}$, $\eta=1.5$, $\bm{\mathcal Y}_1=0$, $\bm{\mathcal Y}_2=0$, $\mathbf{\overline{Q}}^{(v)}$ is identity matrix;
\STATE Compute graph matrix $\mathbf S^{(v)}$ of each views;
\WHILE{not condition}
\STATE Update $\bm{\mathcal {\overline{G}}}^{(v)}$ by solving  (\ref{solveG});
\STATE Update $\bm{\mathcal {\overline{H}}}^{(v)}$ by solving  (\ref{solveQ});
\STATE Update $\bm{\mathcal {\overline{Q}}}^{(v)}$ by solving  (\ref{solveH});
\STATE Update ${\bm{{\mathcal J}}}$ by using  (\ref{solveJ});
\STATE Update $\bm{\mathcal Y}_1$, $\bm{\mathcal Y}_2$, $\mu$ and $\rho$: $\bm{\mathcal Y}_1=\bm{\mathcal Y}_1+\mu(\bm{\mathcal H}-\bm{\mathcal Q})$, $\bm{\mathcal Y}_2=\bm{\mathcal Y}_2+\mu(\bm{\mathcal H}-\bm{\mathcal J})$, $\mu=\min(\eta\mu, 10^{13})$, $\rho=\min(\eta\rho, 10^{13})$;
\ENDWHILE
\STATE Calculate the $K$ clusters by using \\
$\mathbf H=\sum_{v=1}^V \mathbf H^{(v)} / V$;
\STATE \textbf{return} Clustering result (The position of the largest element in each row of the indicator matrix is the label of the corresponding sample).
\end{algorithmic}
\end{algorithm}

\section{Experiments}
In this section, we demonstrate the performance of our proposed method through extensive experiments. We evaluate the clustering performance by 3 metrics used widely, \ie, 1) ACC; 2) NMI; 3) Purity. The higher the value the better the clustering results for all metrics mentioned above. To ensure reliability, we conducted 10 independent trials for each method, recording the mean and variance of the results.
Experiments using the MSRC, HandWritten4, Mnist4, and Scene15 datasets were conducted on a laptop equipped with an Intel Core i5-8300H CPU and 16 GB RAM, utilizing Matlab R2018b. In contrast, the Reuters and NoisyMnist were processed on a standard Windows 10 server, featuring dual Intel(R) Xeon(R) Gold 6230 CPUs at 2.10 GHz and 128 GB RAM, with MATLAB R2020a.

The datasets employed in our experiments are detailed in Table \ref{datasets}.
We compare our approach against the following state-of-the-art methods: CSMSC \cite{luo2018consistent}, GMC \cite{wang2019gmc}, ETLMSC \cite{WuLZ19}, LMVSC \cite{kang2020large}, FMCNOF \cite{yang2021fast}, SFMC \cite{li2022multiview}, and FPMVS-CAG \cite{wang2021fast}.

\begin{table}[t]
\caption{Multi-view datasets used in our experiments}
\label{datasets}
\centering
\resizebox{\columnwidth}{!}
{
\begin{tabular}{ccccc}
\toprule[2pt]
\#Dataset 	& \#Samples & \#View & \#Class 	& \multicolumn{1}{l}{\#Feature} \\
\midrule
MSRC    & 210  & 5 & 7 & \multicolumn{1}{l}{24, 576, 512, 256, 254} \\
HandWritten4   & 2000 & 4 & 10 & \multicolumn{1}{l}{76, 216, 47, 6} \\
Mnist4	& 4000 	& 3	& 4	& \multicolumn{1}{l}{30, 9, 30}   \\
Scene15 & 4485 	& 3	& 15	& \multicolumn{1}{l}{1800, 1180, 1240}   \\
Reuters & 18758 & 5 & 6 & \multicolumn{1}{l}{21531, 24892, 34251, 15506, 11547} \\
\toprule[2pt]
\end{tabular}
}
\end{table}

\subsection{Clustering Performance}
Table \ref{result1} and Table \ref{result2} show the clustering performance of the proposed model on different datasets. The optimal performance is indicated by \textbf{bold}, and the sub-optimal performance is indicated by \underline{underline}. It can be seen from the three tables that the method proposed in this paper is superior to other comparison methods. Among comparison methods, ETLMSC's performance is relatively suboptimal. It is worth mentioning that this method is a tensor-based spectral clustering method, while others are non-tensor methods. It may means that tensor-based methods have a certain degree of performance improvement compared with non-tensor methods.

\begin{table*}[h]
\caption{Clustering performance on MSRC, HandWritten4, Mnist4 and Scene15}
\label{result1}
\centering
\begin{tabular}{c|ccc|ccc}
\toprule[2pt]
Datasets    &\multicolumn{3}{c}{MSRC}  &\multicolumn{3}{c}{HandWritten4} \\
\midrule
Metrices  &ACC &NMI &Purity  &ACC &NMI &Purity  \\
\midrule[1pt]
CSMSC   &0.758$\pm$0.007 &0.735$\pm$0.010 &0.793$\pm$0.008      &0.806$\pm$0.001 &0.793$\pm$0.001 &0.867$\pm$0.001  \\
GMC     &0.895$\pm$0.000 &0.809$\pm$0.000 &0.895$\pm$0.000      &0.861$\pm$0.000 &0.859$\pm$0.000 &0.861$\pm$0.000  \\
ETLMSC  &\underline{0.962$\pm$0.000} &\underline{0.937$\pm$0.000} &\underline{0.962$\pm$0.000}      &\underline{0.938$\pm$0.001} &\underline{0.893$\pm$0.001} &\underline{0.938$\pm$0.001}   \\
LMVSC   &0.814$\pm$0.000 &0.717$\pm$0.000 &0.814$\pm$0.000      &0.904$\pm$0.000 &0.831$\pm$0.000 &0.904$\pm$0.000  \\
FMCNOF  &0.440$\pm$0.039 &0.345$\pm$0.046 &0.449$\pm$0.042      &0.385$\pm$0.092 &0.370$\pm$0.092 &0.386$\pm$0.090   \\
SFMC    &0.810$\pm$0.000 &0.721$\pm$0.000 &0.810$\pm$0.000      &0.853$\pm$0.000 &0.871$\pm$0.000 &0.873$\pm$0.000   \\
FPMVS-CAG &0.786$\pm$0.000 &0.686$\pm$0.000 &0.786$\pm$0.000    &0.744$\pm$0.000 &0.753$\pm$0.000 &0.744$\pm$0.000   \\
Ours    &\textbf{0.986$\pm$0.000} &\textbf{0.971$\pm$0.000} &\textbf{0.986$\pm$0.000}  &\textbf{0.963$\pm$0.000} &\textbf{0.937$\pm$0.000} &\textbf{0.963$\pm$0.000}    \\
\midrule[2pt]
Datasets    &\multicolumn{3}{c}{Mnist4}  &\multicolumn{3}{c}{Scene15}\\
\midrule
Metrices  &ACC &NMI &Purity  &ACC &NMI &Purity   \\
\midrule[1pt]
CSMSC     &0.641$\pm$0.000 &0.601$\pm$0.010 &0.728$\pm$0.008    &0.334$\pm$0.008 &0.313$\pm$0.005 &0.378$\pm$0.003 \\
GMC       &0.920$\pm$0.000 &0.807$\pm$0.000 &0.920$\pm$0.000    &0.140$\pm$0.000 &0.058$\pm$0.000 &0.146$\pm$0.000 \\
ETLMSC    &\underline{0.934$\pm$0.000} &\underline{0.847$\pm$0.000} &\underline{0.934$\pm$0.000}    &\underline{0.709$\pm$0.000}  &\underline{0.774$\pm$0.000} &\underline{0.887$\pm$0.000} \\
LMVSC     &0.892$\pm$0.000 &0.726$\pm$0.000 &0.892$\pm$0.000    &0.355$\pm$0.000 &0.331$\pm$0.000 &0.399$\pm$0.000 \\
FMCNOF    &0.697$\pm$0.119 &0.490$\pm$0.102 &0.711$\pm$0.096    &0.218$\pm$0.033 &0.166$\pm$0.022 &0.221$\pm$0.029 \\
SFMC      &0.916$\pm$0.000 &0.797$\pm$0.000 &0.916$\pm$0.000    &0.188$\pm$0.000 &0.135$\pm$0.000 &0.202$\pm$0.000 \\
FPMVS-CAG   &0.885$\pm$0.000 &0.715$\pm$0.000 &0.885$\pm$0.000    &0.463$\pm$0.000 &0.486$\pm$0.000 &0.481$\pm$0.000 \\
Ours      &\textbf{0.982$\pm$0.000} &\textbf{0.933$\pm$0.000} &\textbf{0.982$\pm$0.000}  &\textbf{0.800$\pm$0.000} &\textbf{0.835$\pm$0.000} &\textbf{0.808$\pm$0.000} \\
\midrule[2pt]
\end{tabular}
\end{table*}

\begin{table}[h]
\caption{Clustering performance on Reuters ("OM" means out of memory, and "-" means the algorithm ran for more than three hours.)}
\label{result2}
\centering
\begin{tabular}{c|ccc}
\toprule[2pt]
Datasets    &\multicolumn{3}{c}{Reuters} \\
\midrule
Metrices   & ACC & NMI & Purity    \\
\midrule[1pt]
CSMSC    &OM &OM &OM         \\
GMC     &- &- &-            \\
ETLMSC  &OM &OM &OM         \\
LMVSC   &0.589$\pm$0.000 &0.335$\pm$0.000 &\underline{0.615$\pm$0.000}        \\
FMCNOF   &0.343$\pm$0.000 &0.125$\pm$0.000 &0.358$\pm$0.000       \\
SFMC     &\underline{0.602$\pm$0.000} &\underline{0.354$\pm$0.000} &0.604$\pm$0.000      \\
FPMVS-CAG  &0.526$\pm$0.000 &0.323$\pm$0.000 &0.603$\pm$0.000    \\
Ours    &\textbf{0.770$\pm$0.000} &\textbf{0.690$\pm$0.000} &\textbf{0.783$\pm$0.000} \\
\midrule[2pt]
\end{tabular}
\end{table}


\begin{figure}[htbp]
	\centering
	\includegraphics[width=0.8\linewidth]{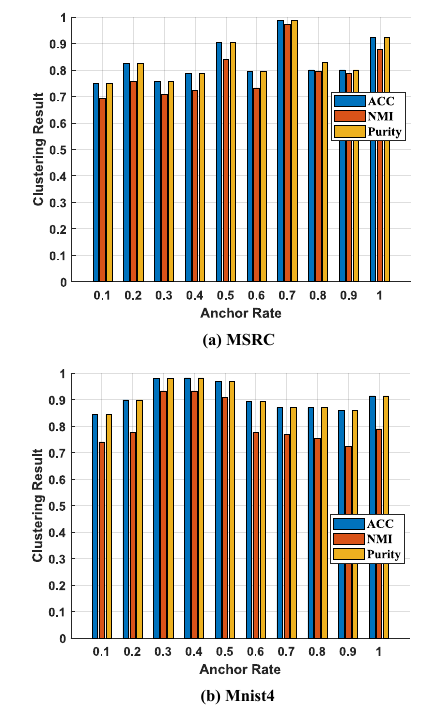}
	\caption{Clustering performance with different anchor rate on MSRC and Mnist4}
	\label{anchorfig}
\end{figure}

\begin{figure}[htbp]
	\centering
	\includegraphics[width=1\linewidth]{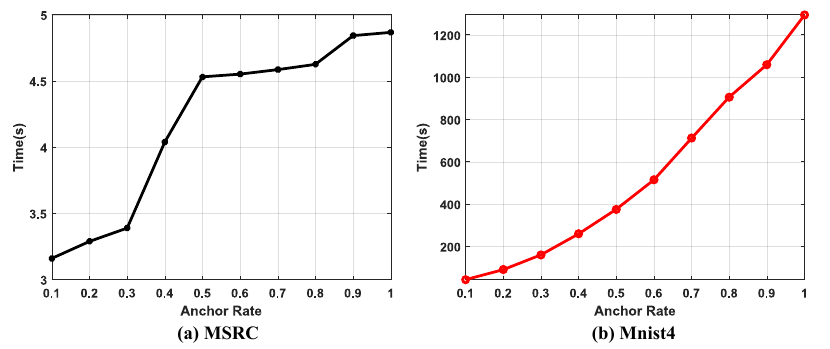}
	\caption{Running time (sec.) with different anchor rate on MSRC and Mnist4}
	\label{timefig}
\end{figure}

\begin{figure}[htbp]
	\centering
	\includegraphics[width=1.0\linewidth]{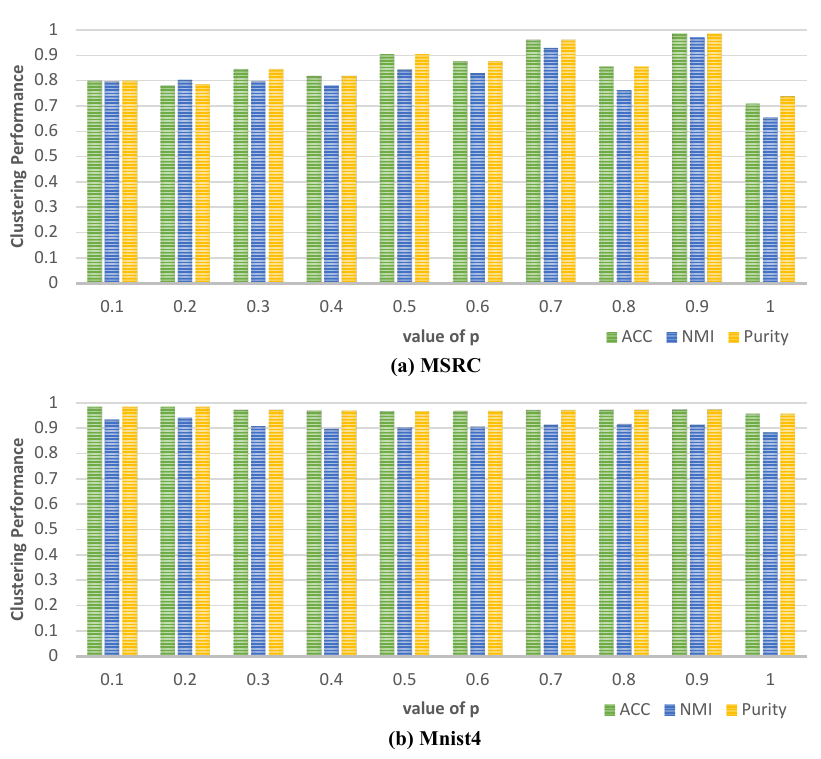}
	\caption{Clustering performance with different $p$ on MSRC and Mnist4}
	\label{pfig}
\end{figure}

\begin{figure}[htbp]
	\centering
	\includegraphics[width=1.0\linewidth]{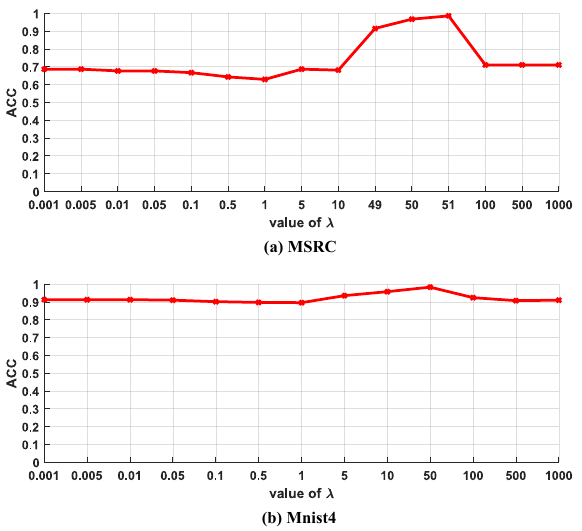}
	\caption{Clustering performance with different $\lambda$ on MSRC and Mnist4}
	\label{lambdafig}
\end{figure}

\subsection{Parameter Analysis}
The hyper-parameters in the model are analyzed experimentally, including the anchor rate (affecting the constructed anchor graph), the value of $p$ in the tensor Schatten $p$-norm, and the value of the coefficient $\lambda$ of the regular term of the tensor Schatten $p$-norm.

The effect of anchor rate on clustering performance is shown in Figure \ref{anchorfig}. We conducted experiments on a small dataset, MSRC, and a medium-sized dataset, Mnist4. They obtained the best indicators at anchor rates of 0.7 and 0.4, respectively. The experimental results show that larger anchor rate is not always better. In addition, according to common sense, the larger the anchor rate, the more time and space it takes to construct the anchor graph.
Figure \ref{timefig} confirms this statement. It can be found that with the increase of the anchor rate, the running time of the algorithm also increases in an approximate linear relationship. It is more obvious on the Mnist4.
It also shows the importance of the choice of anchor rate on the other hand. In general, for large data sets, we tend to use smaller anchor rates.

The influence of the value $p$ on the clustering performance is shown in Figure \ref{pfig}. We start from $p=0.1$ and conduct experiments at intervals of 0.1 until $p=1$. It can be seen from the figure that the best clustering performance is achieved on MSRC and Mnist4 when $p$ is 0.9 and 0.2 respectively. It can also be shown from the experimental point of view that, compared with the nuclear norm (where $p=1$), Schatten $p$-norm minimization can ensure that the rank of the tensor is closer to the target rank. Thus, the complementary information between different views can be better mined, and better clustering performance can be obtained.

Figure \ref{lambdafig} shows the effect of different values of $\lambda$ on clustering performance. Among them, when $\lambda$ achieves 51 and 50, the best clustering results are obtained on MSRC and Mnist4, respectively. Moreover, it can be found from the figure that MSRC is more sensitive to the value of $\lambda$ than Mnist4. When conducting experiments on other small and medium-sized datasets, the value of $\lambda$ is generally adjusted to a small range around 50.

\subsection{Experiments of Convergence}
We tested the convergence of the model, as shown in Figure \ref{convfig}. Since we introduced two auxiliary variables $\bm{\mathcal Q}$ and $\bm{\mathcal J}$ when solving the model, we judged whether the model could gradually converge after enough iterations by calculating the difference between variables $\bm{\mathcal H}$ and $\bm{\mathcal Q}$ and the difference between $\bm{\mathcal H}$ and $\bm{\mathcal J}$ in Eq. (\ref{objective function}). As shown in Figure \ref{convfig}, both differences approach 0 approximately within 50 iterations, so it can be judged that the model proposed in this paper is convergent. At the same time, we also draw the change curve of clustering index ACC with the number of iterations. It can be found from the figure that ACC also converges with the convergence of the model.

\begin{figure}[htbp]
	\centering
	\includegraphics[width=0.75\linewidth]{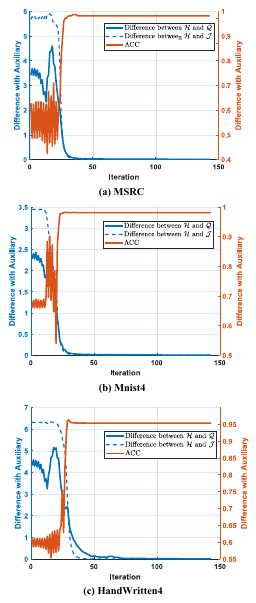}
	\caption{Convergence experiment on MSRC, Mnist4 and HandWritten4}
	\label{convfig}
\end{figure}

\subsection{Visualization of Experimental Results}
We conducted a visualization experiment on the label matrix $\mathbf{H}$ after final fusion according to Algorithm \ref{A1}, as shown in Figure \ref{labelfig}. In the cluster label matrix $\mathbf{H}$ after fusion, the position where the maximum value of each row is located can be regarded as the cluster to which the sample of that row belongs. It can be found that MSRC, Mnist4 and HandWritten4 are clearly divided into 7 clusters, 4 clusters and 10 clusters, respectively.

\begin{figure}[htbp]
	\centering
	\includegraphics[width=0.75\linewidth]{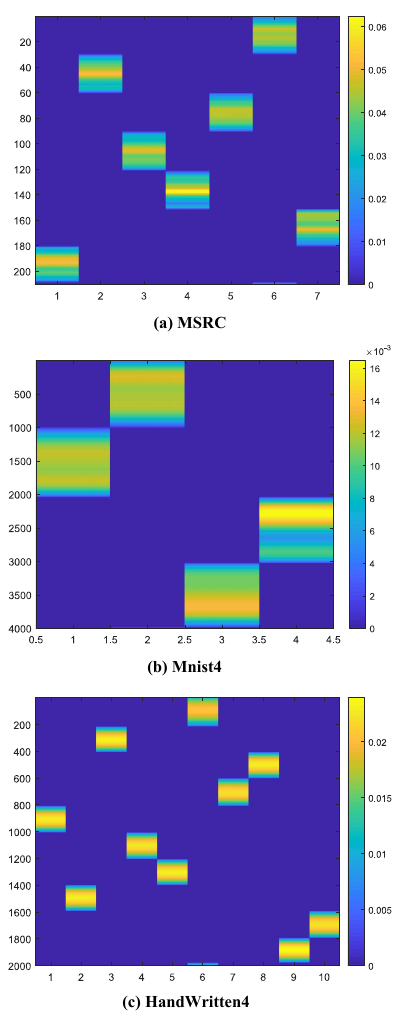}
	\caption{Label visualization on MSRC, Mnist4 and HandWritten4}
	\label{labelfig}
\end{figure}

\section{Conclusions}
In this paper, we propose a label learning method based on tensor projection (LLMTP). LLMTP projects the anchor graph space into the label space and thus we can get the clustering results from the label tensor directly. Meanwhile, in order to make full use of the complementary information and spatial structure information between different views, we extend the view-by-view matrix projection process to tensor projection processing multi-view data directly, and use the tensor Schatten $p$-norm to make the clustering label matrix of each view tend to be consistent. Extensive experiments have proved the effectiveness of the proposed method.

\ifCLASSOPTIONcompsoc
\else
\fi
\ifCLASSOPTIONcaptionsoff
  \newpage
\fi

{\small
\bibliographystyle{IEEEtran}
\bibliography{egbib}
}

\end{document}